\documentclass[twocolumn, switch]{article} % Method A for two-column formatting

\usepackage{preprint}

\usepackage{amsmath, amsthm, amssymb, amsfonts}
\usepackage{graphicx}
\usepackage{natbib}
\usepackage[utf8]{inputenc}	% allow utf-8 input
\usepackage[T1]{fontenc}	% use 8-bit T1 fonts
\usepackage{xcolor}		% colors for hyperlinks
\usepackage[colorlinks = true,
            linkcolor = purple,
            urlcolor  = blue,
            citecolor = cyan,
            anchorcolor = black]{hyperref}	% Color links to references, figures, etc.
\usepackage{booktabs} 		% professional-quality tables
\usepackage{nicefrac}		% compact symbols for 1/2, etc.
\usepackage{microtype}		% microtypography
\usepackage{lineno}		% Line numbers
\usepackage{float}			% Allows for figures within multicol

\usepackage{lipsum}		%  Filler text

\usepackage{newfloat}
\DeclareFloatingEnvironment[name={Supplementary Figure}]{suppfigure}
\usepackage{sidecap}
\sidecaptionvpos{figure}{c}

\usepackage{titlesec}
\titlespacing\section{0pt}{12pt plus 3pt minus 3pt}{1pt plus 1pt minus 1pt}
\titlespacing\subsection{0pt}{10pt plus 3pt minus 3pt}{1pt plus 1pt minus 1pt}
\titlespacing\subsubsection{0pt}{8pt plus 3pt minus 3pt}{1pt plus 1pt minus 1pt}

\usepackage{tikz,xcolor,hyperref}

\definecolor{lime}{HTML}{A6CE39}
\DeclareRobustCommand{\orcidicon}{
	\begin{tikzpicture}
	\draw[lime, fill=lime] (0,0)
	circle [radius=0.16]
	node[white] {{\fontfamily{qag}\selectfont \tiny ID}};
	\draw[white, fill=white] (-0.0625,0.095)
	circle [radius=0.007];
	\end{tikzpicture}
	\hspace{-2mm}
}
\foreach \x in {A, ..., Z}{\expandafter\xdef\csname orcid\x\endcsname{\noexpand\href{https://orcid.org/\csname orcidauthor\x\endcsname}
			{\noexpand\orcidicon}}
}
\usepackage{graphicx}
\usepackage{amsthm}
\usepackage{dsfont}
\usepackage{placeins}
\usepackage{booktabs}
\usepackage{caption}
\usepackage{subcaption}
\usepackage{placeins}
\newtheorem{theorem}{Theorem}    % theorem
\newtheorem{corollary}[theorem]{Corollary}      % corollary
\newtheorem{proposition}[theorem]{Proposition}   % proposition

\usepackage[table]{xcolor}
\usepackage{algorithm}
\usepackage{algorithmic}
\usepackage{pgfplots}
\pgfplotsset{compat=1.18}
\usetikzlibrary{arrows.meta,decorations.pathreplacing,positioning}

\usepackage{float}
\usepackage{booktabs}
\usepackage{multirow}
\usepackage{caption}
\usepackage{threeparttable} % Pour les notes de bas de tableau
\usepackage{siunitx} % Pour l'alignement des décimales

\title{Low-Complexity Policy Tessellations in Structured Markov Decision Processes}

\usepackage{xwatermark}
\newwatermark[firstpage,color=gray!60,angle=90,scale=0.32, xpos=-4.05in,ypos=0]{\href{https://doi.org/}{\color{gray}{Publication doi}}}
\newwatermark[firstpage,color=gray!60,angle=90,scale=0.32, xpos=3.9in,ypos=0]{\href{https://doi.org/}{\color{gray}{Preprint doi}}}
\newwatermark[firstpage,color=gray!90,angle=0,scale=0.28, xpos=0in,ypos=-5in]{*correspondence: \texttt{email@institution.edu}}

\newcommand{\result}[2]{$#1$ $ \pm\, #2$}
\newcommand{\best}[2]{\textbf{#1} $ \pm\, #2$}

\usepackage{authblk}

\author[1\thanks{\tt{fredy-vale-manuel.pokou@inria.fr}}]{Frédy. Pokou \orcidA{}}
\affil[1]{Inria, University of Lille, CNRS, Centrale Lille Villeneuve-d’Ascq, France}
\begin{document}

\twocolumn[ % Method A for two-column formatting
  \begin{@twocolumnfalse} % Method A for two-column formatting

\maketitle

\begin{abstract}
We study optimal-policy geometry in structured Markov decision processes. While approximate dynamic programming and reinforcement learning typically approximate high-dimensional value functions, we show that optimal policies induce simpler decision tessellations. We propose boundary-based policy approximations that learn policy regions directly. A policy-loss decomposition links performance degradation to action margins and explains why errors concentrate near indifference boundaries. Inventory control and queue admission experiments show lower policy error, smaller value gaps, faster error decay, and stability than reinforcement learning baselines.
\end{abstract}

\keywords{Structured Markov Decision Processes
\and
Approximate Dynamic Programming
\and
Policy Tessellations
\and
Policy Approximation
\and
Decision Geometry
} 
\vspace{0.35cm}

  \end{@twocolumnfalse} % Method A for two-column formatting
] % Method A for two-column formatting

%\begin{multicols}{2} % Method B for two-column formatting (doesn't play well with line numbers), comment out if using method A

%%%%%%%%%%%%%%%  Main text   %%%%%%%%%%%%%%%
% \linenumbers

\section{Introduction}
\label{sec1}

Markov decision processes (MDPs) are a standard framework for sequential decision-making under uncertainty and constitute the mathematical basis of dynamic programming, approximate dynamic programming, and reinforcement learning \citep{bellman1957dynamic,puterman1994markov,bertsekas2025neuro,powell2007approximate}. A central difficulty in these methods is that they usually require the approximation of a value function or an action-value function over a potentially large state space. Even when the optimal decision rule is simple, the associated value landscape may be numerically rich, which can lead to unnecessary approximation and sampling burdens.

This paper studies a complementary viewpoint. In a finite-action MDP, an optimal decision depends only on comparisons between action values. Hence, for the purpose of decision-making, the relevant object is not the full surface $Q^*(s,a)$, but the partition of the state space into regions where each action is optimal. We refer to this partition as a policy tessellation. Our main premise is that, in many structured MDPs, this tessellation has substantially lower geometric complexity than the value function that induces it.

This perspective is related to classification-based reinforcement learning \citep{lagoudakis2003reinforcement,lazaric2010analysis}, but our focus is different. Rather than proposing a general replacement for value-based methods, we study the geometry of optimal policies in structured dynamic programs and ask whether their decision boundaries can be approximated directly. We consider linear, neural, margin-aware, and local interpolation approximations of policy regions.

The paper makes three contributions. First, it formalizes policy tessellations and introduces computable geometric diagnostics, including normalized boundary length, boundary fragmentation, and action-region components. Second, it establishes a policy-loss decomposition showing that performance degradation is governed by local action losses and, under margin conditions, by errors concentrated near indifference boundaries. Third, numerical experiments on inventory control and queue admission problems show that boundary-based approximations can achieve near-optimal policies with low effective complexity relative to classical value-based reinforcement learning methods.

Overall, the results suggest that, for structured MDPs, approximating the geometry of optimal decisions may be simpler than approximating complete value functions.

\section{Structured MDPs and Policy Tessellations}
\label{sec2}

We consider a discounted MDP
\begin{equation}
\mathcal{M}=(\mathcal{S},\mathcal{A},P,r,\gamma),
\label{eq:mdp}
\end{equation}
where $\mathcal{S}$ is the state space, $\mathcal{A}=\{1,\ldots,K\}$ is a finite action set, $P(\cdot\mid s,a)$ is the transition kernel, $r:\mathcal{S}\times\mathcal{A}\to\mathbb{R}$ is a bounded one-period reward, and $\gamma\in(0,1)$ is the discount factor. For a stationary deterministic policy $\pi:\mathcal{S}\to\mathcal{A}$, its value function is
\begin{equation}
V^\pi(s)
=
\mathbb{E}^{\pi}_s
\left[
\sum_{t=0}^{\infty}\gamma^t r(S_t,\pi(S_t))
\right].
\label{eq:value_policy}
\end{equation}
The optimal value function satisfies
\begin{equation}
V^*(s)
=
\max_{a\in\mathcal{A}}
\left\{
r(s,a)
+
\gamma\int_{\mathcal{S}}V^*(s')P(ds'\mid s,a)
\right\}.
\label{eq:bellman_optimality}
\end{equation}
The optimal action-value function is
\begin{equation}
Q^*(s,a)
=
r(s,a)
+
\gamma\int_{\mathcal{S}}V^*(s')P(ds'\mid s,a),
\label{eq:qstar}
\end{equation}
so that $V^*(s)=\max_{a\in\mathcal{A}}Q^*(s,a)$.

Since ties may occur, we fix throughout a deterministic tie-breaking rule $\tau$. The optimal policy is then the single-valued map
\begin{equation}
\pi^*(s)
=
\tau\left(\arg\max_{a\in\mathcal{A}}Q^*(s,a)\right).
\label{eq:optimal_policy}
\end{equation}
This convention is used only to assign states on indifference sets to one action region; it does not affect optimality.

For any pair of actions $a,b\in\mathcal{A}$, define the pairwise action gap
\begin{equation}
G_{ab}(s)
=
Q^*(s,a)-Q^*(s,b).
\label{eq:pairwise_gap}
\end{equation}
The corresponding indifference set is
\begin{equation}
\Gamma_{ab}
=
\{s\in\mathcal{S}:G_{ab}(s)=0\}.
\label{eq:indifference_set}
\end{equation}
The optimal decision region associated with action $a$ is
\begin{equation}
\mathcal{R}_a
=
\{s\in\mathcal{S}:\pi^*(s)=a\}.
\label{eq:decision_region}
\end{equation}
The collection
\begin{equation}
\mathcal{T}^*
=
\{\mathcal{R}_a:a\in\mathcal{A}\}
\label{eq:policy_tessellation}
\end{equation}
is called the optimal policy tessellation. It forms a partition of $\mathcal{S}$, up to empty regions.

If $\mathcal{S}$ is endowed with a topology and the functions $Q^*(\cdot,a)$ are continuous, then the topological boundary of the decision regions is contained in the union of pairwise indifference sets. In the finite benchmarks considered below, we avoid topological ambiguity and work directly with a discrete boundary.

Let the structured finite state space be embedded in a two-dimensional rectangular grid,
\begin{equation}
\mathcal{S}_h
=
\{0,\ldots,n_1\}\times\{0,\ldots,n_2\}
\subset\mathbb{Z}^2.
\label{eq:grid_state_space}
\end{equation}
Let $E_h$ be the set of undirected nearest-neighbor grid edges,
\begin{equation}
E_h
=
\big\{
\{s,s'\}:s,s'\in\mathcal{S}_h,\ \|s-s'\|_1=1
\big\}.
\label{eq:grid_edges}
\end{equation}
The discrete policy boundary is
\begin{equation}
B_h
=
\big\{
\{s,s'\}\in E_h:\pi^*(s)\neq\pi^*(s')
\big\}.
\label{eq:discrete_boundary}
\end{equation}
We define the normalized boundary length as
\begin{equation}
L_h
=
\frac{|B_h|}{|E_h|}.
\label{eq:boundary_length}
\end{equation}
A small value of $L_h$ indicates that the policy consists of large homogeneous decision regions separated by relatively few switching edges.

We also use the boundary-state fraction
\begin{equation}
\small
F_h
=
\frac{\left|
\left\{
s\in\mathcal{S}_h:
\exists s'\in\mathcal{S}_h,\ 
\{s,s'\}\in E_h,\ 
\pi^*(s)\neq\pi^*(s')
\right\}
\right|}{|\mathcal{S}_h|}
.
\label{eq:boundary_fraction}
\end{equation}
Let $C_h^B$ denote the number of connected components of this boundary-state set under four-neighbor connectivity. For each action $a$, let $C_h(a)$ be the number of connected components of $\mathcal{R}_a$ on the grid. The total number of action-region components is
\begin{equation}
C_h^R
=
\sum_{a\in\mathcal{A}} C_h(a).
\label{eq:region_components}
\end{equation}
Finally, let $K_h$ be the number of boundary states having at least one horizontal and one vertical boundary neighbor. We define the normalized corner index
\begin{equation}
\kappa_h
=
\frac{K_h}{\max\{|B_h|,1\}}.
\label{eq:corner_index}
\end{equation}
The quantities
\begin{equation}
L_h,\quad F_h,\quad C_h^B,\quad C_h^R,\quad \kappa_h
\label{eq:complexity_vector}
\end{equation}
provide observable diagnostics of the geometric complexity of the optimal policy tessellation. They are not substitutes for statistical complexity measures such as VC dimension \citep{vapnik1998statistical}; rather, they measure the realized decision geometry induced by a given structured MDP.

\section{Boundary-Based Policy Approximation}
\label{sec3}

Section~\ref{sec2} represents the optimal policy as a tessellation of the state space. We now describe approximation schemes that learn this tessellation directly. The object of estimation is the decision map $\pi^*$, not the numerical action-value function $Q^*$. This is consistent with classification-based views of reinforcement learning \citep{lagoudakis2003reinforcement,lazaric2010analysis}, but the emphasis here is on the geometry of the induced policy regions.

Let
\begin{equation}
\mathcal{D}_n
=
\{(S_i,Y_i)\}_{i=1}^n
\label{eq:training_data}
\end{equation}
be a training sample with $S_i\in\mathcal{S}_h$ and
\begin{equation}
Y_i=\pi^*(S_i).
\label{eq:labels}
\end{equation}
In the numerical experiments, the labels are computed by exact dynamic programming on the finite benchmark MDPs. A boundary-based approximation is a map
\begin{equation}
\widehat{\pi}_n:\mathcal{S}_h\to\mathcal{A}
\label{eq:approx_policy}
\end{equation}
trained to approximate $\pi^*$ from $\mathcal{D}_n$. It induces approximate decision regions
\begin{equation}
\widehat{\mathcal{R}}_a
=
\{s\in\mathcal{S}_h:\widehat{\pi}_n(s)=a\},
\qquad a\in\mathcal{A}.
\label{eq:approx_regions}
\end{equation}

We consider four approximation schemes. The first is a linear boundary classifier. For an embedded state representation $x(s)\in\mathbb{R}^d$, define scores
\begin{equation}
g_{\theta,a}(s)
=
w_a^\top x(s)+b_a,
\qquad a\in\mathcal{A},
\label{eq:linear_scores}
\end{equation}
where $\theta=\{(w_a,b_a)\}_{a\in\mathcal{A}}$. The induced policy is
\begin{equation}
\widehat{\pi}_{\theta}(s)
=
\tau\left(\arg\max_{a\in\mathcal{A}}g_{\theta,a}(s)\right).
\label{eq:score_policy}
\end{equation}
This model generates polyhedral decision regions and is therefore a parsimonious approximation when the optimal switching geometry is close to linear or monotone.

The second scheme uses a nonlinear score map
\begin{equation}
g_\theta:\mathcal{S}_h\to\mathbb{R}^K,
\label{eq:neural_scores}
\end{equation}
represented by a feedforward neural network. The policy is again given by \eqref{eq:score_policy}. Compared with the linear model, the neural specification can represent curved or disconnected decision regions, at the cost of a larger hypothesis class.

Both score-based approximations are trained by minimizing a multiclass empirical loss. In the experiments we use cross-entropy,
\begin{equation}
\widehat{\mathcal{L}}_n(\theta)
=
\frac{1}{n}
\sum_{i=1}^n
\ell(g_\theta(S_i),Y_i),
\label{eq:empirical_loss}
\end{equation}
where
\begin{equation}
\ell(z,y)
=
-\log
\left(
\frac{\exp(z_y)}
{\sum_{a\in\mathcal{A}}\exp(z_a)}
\right).
\label{eq:cross_entropy}
\end{equation}
Other classification-calibrated losses could be used \citep{bartlett2006convexity}. The essential point is that the loss targets the induced decision regions, not the numerical values of $Q^*(s,a)$.

The third scheme is margin-aware classification. Define the optimal action margin
\begin{equation}
\Delta(s)
=
Q^*(s,\pi^*(s))
-
\max_{a\neq \pi^*(s)}Q^*(s,a).
\label{eq:margin}
\end{equation}
By construction, $\Delta(s)\ge 0$. Small values of $\Delta(s)$ identify states close, in action-value terms, to an indifference boundary. We therefore use the weighted empirical loss
\begin{equation}
\widehat{\mathcal{L}}^{m}_n(\theta)
=
\frac{1}{\sum_{i=1}^n w_i}
\sum_{i=1}^n
w_i\,\ell(g_\theta(S_i),Y_i),
\label{eq:margin_loss}
\end{equation}
with
\begin{equation}
w_i
=
\frac{1}{\Delta(S_i)+\eta},
\qquad \eta>0.
\label{eq:margin_weights}
\end{equation}
The constant $\eta$ prevents singular weights and controls the intensity of the boundary emphasis.

The fourth scheme is local interpolation. For a query state $s$, let $\mathcal{N}_k(s)$ be the set of $k$ nearest sampled states to $s$ in the embedded state metric. The local boundary rule is
\begin{equation}
\widehat{\pi}_{k}(s)
=
\tau\left(
\arg\max_{a\in\mathcal{A}}
\sum_{S_i\in\mathcal{N}_k(s)}
\omega_i(s)\mathbf{1}_{\{Y_i=a\}}
\right),
\label{eq:knn_policy}
\end{equation}
where $\omega_i(s)\ge 0$ and
\begin{equation}
\sum_{S_i\in\mathcal{N}_k(s)}\omega_i(s)=1.
\label{eq:knn_weights}
\end{equation}
In the experiments, inverse-distance weights are used. This estimator provides a flexible local benchmark, but its effective complexity grows with the sample size.

All four schemes produce approximate tessellations of $\mathcal{S}_h$. Their comparison separates the effects of linear boundary structure, nonlinear boundary geometry, local interpolation, and explicit margin weighting.

\section{Structural Properties of Boundary-Based Policies}
\label{sec4}

This section relates errors in approximate tessellations to policy performance. We assume that rewards are uniformly bounded: there exists $R_{\max}<\infty$ such that
\begin{equation}
|r(s,a)|\le R_{\max},
\qquad (s,a)\in\mathcal{S}\times\mathcal{A}.
\label{eq:bounded_rewards}
\end{equation}
It follows that
\begin{equation}
|V^\pi(s)|\le \frac{R_{\max}}{1-\gamma},
\qquad
|Q^*(s,a)|\le \frac{R_{\max}}{1-\gamma}.
\label{eq:value_bound}
\end{equation}

For any stationary deterministic policy $\pi$, define the local optimality loss
\begin{equation}
\ell_\pi(s)
=
V^*(s)-Q^*(s,\pi(s)).
\label{eq:local_loss}
\end{equation}
Then $\ell_\pi(s)\ge 0$, and $\ell_\pi(s)=0$ whenever $\pi(s)$ is optimal at $s$.

\begin{proposition}[Policy-loss decomposition]
\label{prop:loss_decomp}
For any stationary deterministic policy $\pi$ and any initial state $s$,
\begin{equation}
V^*(s)-V^\pi(s)
=
\mathbb{E}_s^\pi
\left[
\sum_{t=0}^{\infty}
\gamma^t
\ell_\pi(S_t)
\right].
\label{eq:loss_decomp_state}
\end{equation}
Consequently, for any initial distribution $\mu$,
\begin{equation}
J(\pi^*)-J(\pi)
=
\frac{1}{1-\gamma}
\mathbb{E}_{S\sim d_\mu^\pi}
[
\ell_\pi(S)
],
\label{eq:loss_decomp_occupancy}
\end{equation}
where
\begin{equation}
J(\pi)=\mathbb{E}_{S_0\sim\mu}[V^\pi(S_0)]
\label{eq:performance}
\end{equation}
and
\begin{equation}
d_\mu^\pi(B)
=
(1-\gamma)
\sum_{t=0}^{\infty}
\gamma^t
\mathbb{P}_\mu^\pi(S_t\in B)
\label{eq:occupancy}
\end{equation}
is the normalized discounted occupancy measure.
\end{proposition}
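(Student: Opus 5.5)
The plan is the standard performance-difference argument, carried out directly with the Bellman equations. I would first record the two identities it rests on. The policy evaluation equation is
\begin{equation*}
V^\pi(s)=r(s,\pi(s))+\gamma\int_{\mathcal{S}}V^\pi(s')P(ds'\mid s,\pi(s)),
\end{equation*}
which follows from \eqref{eq:value_policy} by conditioning on the first transition. The second identity is the definition \eqref{eq:qstar} evaluated at $a=\pi(s)$. Subtracting one from the other gives, for every $s$,
\begin{equation*}
V^*(s)-V^\pi(s)=\ell_\pi(s)+\gamma\int_{\mathcal{S}}\big(V^*(s')-V^\pi(s')\big)P(ds'\mid s,\pi(s)),
\end{equation*}
because $V^*(s)-Q^*(s,\pi(s))=\ell_\pi(s)$ by \eqref{eq:local_loss}. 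Writing $D=V^*-V^\pi$, this says $D=\ell_\pi+\gamma P_\pi D$, where $P_\pi$ is the transition operator of the chain driven by $\pi$.

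Next I would iterate this identity $T$ times. Using the Markov property under $\mathbb{P}^\pi_s$, this gives
\begin{equation*}
D(s)=\mathbb{E}^\pi_s\Big[\sum_{t=0}^{T-1}\gamma^t\ell_\pi(S_t)\Big]+\gamma^T\,\mathbb{E}^\pi_s[D(S_T)].
\end{equation*}
By \eqref{eq:value_bound}, $|D|\le 2R_{\max}/(1-\gamma)$, so the remainder term vanishes as $T\to\infty$. The partial sums have nonnegative terms, since $\ell_\pi\ge 0$. Moreover $\ell_\pi$ is bounded by $2R_{\max}/(1-\gamma)$, so monotone convergence (or dominated convergence) lets me pass to the limit and exchange expectation and sum. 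This yields \eqref{eq:loss_decomp_state}.

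For \eqref{eq:loss_decomp_occupancy}, I would integrate \eqref{eq:loss_decomp_state} against $\mu$. This gives $J(\pi^*)-J(\pi)$ on the left, since $V^{\pi^*}=V^*$. That identity holds because $\pi^*$ attains the maximum in \eqref{eq:bellman_optimality}, so $\ell_{\pi^*}\equiv0$, and the first part then gives $V^*-V^{\pi^*}=0$. On the right, Tonelli's theorem (valid because all terms are nonnegative) moves the sum outside the expectation, giving $\sum_t\gamma^t\mathbb{E}^\pi_\mu[\ell_\pi(S_t)]$. By the definition \eqref{eq:occupancy}, extended from indicators to nonnegative bounded measurable functions in the usual way, this equals $(1-\gamma)^{-1}\int\ell_\pi\,d d^\pi_\mu$.

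The only delicate step is measure-theoretic bookkeeping in a general state space. One needs measurability of $\pi$ and of $\ell_\pi$ so that the integrals make sense, and the existence and uniqueness of $V^*$ solving \eqref{eq:bellman_optimality}, which is implicitly assumed in the paper. I would state these as standing assumptions; on the finite grids $\mathcal{S}_h$ they are trivial. Everything else is algebra plus boundedness.
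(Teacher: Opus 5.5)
Your proposal is correct and follows the paper's proof essentially line for line. The paper also uses the one-step identity $V^*-V^\pi=\ell_\pi+\gamma P_\pi(V^*-V^\pi)$, iterates it $T$ times with the remainder $\gamma^T\mathbb{E}^\pi_s[V^*(S_T)-V^\pi(S_T)]$ controlled by the value bound, and then integrates against $\mu$ using the occupancy measure. The extra details you supply are steps the paper uses implicitly, not a different route: $V^{\pi^*}=V^*$ follows from the first part because $\ell_{\pi^*}\equiv 0$, and Tonelli justifies exchanging the sum and the expectation.
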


\begin{proof}
For any state $s$,
\[
V^*(s)
=
Q^*(s,\pi(s))+\ell_\pi(s).
\]
Using the definition of $Q^*$ gives
\[
V^*(s)-V^\pi(s)
=
\ell_\pi(s)
+
\gamma
\mathbb{E}_s^\pi
[
V^*(S_1)-V^\pi(S_1)
].
\]
Iterating this identity for $T$ steps yields
\[
V^*(s)-V^\pi(s)
=
\mathbb{E}_s^\pi
\left[
\sum_{t=0}^{T-1}\gamma^t\ell_\pi(S_t)
\right]
+
\gamma^T
\mathbb{E}_s^\pi
[
V^*(S_T)-V^\pi(S_T)
].
\]
The final term converges to zero as $T\to\infty$ by \eqref{eq:value_bound}. This proves \eqref{eq:loss_decomp_state}. Integrating with respect to $S_0\sim\mu$ and using \eqref{eq:occupancy} proves \eqref{eq:loss_decomp_occupancy}.
\end{proof}

Since
\[
0\le \ell_\pi(s)\le \frac{2R_{\max}}{1-\gamma},
\]
we obtain the policy-error bound
\begin{equation}
J(\pi^*)-J(\pi)
\le
\frac{2R_{\max}}{(1-\gamma)^2}
\mathbb{P}_{S\sim d_\mu^\pi}
\{\pi(S)\neq \pi^*(S)\}.
\label{eq:policy_error_bound}
\end{equation}
Thus, performance loss depends on how often the approximate tessellation assigns a state to a wrong decision region.

Define the $\varepsilon$-margin neighborhood of the indifference boundary as
\begin{equation}
\mathcal{B}_\varepsilon
=
\{s\in\mathcal{S}:\Delta(s)\le \varepsilon\}.
\label{eq:margin_neighborhood}
\end{equation}

\begin{proposition}[Margin localization]
\label{prop:margin_localization}
Suppose that an approximate policy $\widehat{\pi}$ satisfies
\begin{equation}
\widehat{\pi}(s)=\pi^*(s),
\qquad s\notin\mathcal{B}_\varepsilon .
\label{eq:no_errors_outside_boundary}
\end{equation}
Then
\begin{equation}
J(\pi^*)-J(\widehat{\pi})
\le
\frac{2R_{\max}}{(1-\gamma)^2}
d_\mu^{\widehat{\pi}}(\mathcal{B}_\varepsilon).
\label{eq:margin_localization_bound}
\end{equation}
If, in addition, for some $C>0$ and $\alpha>0$,
\begin{equation}
d_\mu^{\widehat{\pi}}(\mathcal{B}_\varepsilon)
\le
C\varepsilon^\alpha,
\label{eq:margin_condition}
\end{equation}
then
\begin{equation}
J(\pi^*)-J(\widehat{\pi})
\le
\frac{2R_{\max}C}{(1-\gamma)^2}
\varepsilon^\alpha .
\label{eq:margin_rate_general}
\end{equation}
\end{proposition}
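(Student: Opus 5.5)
The plan is to derive both bounds from Proposition~\ref{prop:loss_decomp}, applied with $\pi=\widehat{\pi}$. By \eqref{eq:loss_decomp_occupancy},
\[
J(\pi^*)-J(\widehat{\pi})=\frac{1}{1-\gamma}\,\mathbb{E}_{S\sim d_\mu^{\widehat{\pi}}}\bigl[\ell_{\widehat{\pi}}(S)\bigr].
\]
So the whole argument reduces to bounding the local loss $\ell_{\widehat{\pi}}$ pointwise and integrating it against the occupancy measure $d_\mu^{\widehat{\pi}}$.

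First I split the state space into $\mathcal{B}_\varepsilon$ and its complement. For $s\notin\mathcal{B}_\varepsilon$, hypothesis \eqref{eq:no_errors_outside_boundary} gives $\widehat{\pi}(s)=\pi^*(s)$. Since $\pi^*(s)$ is a maximizer of $Q^*(s,\cdot)$ (the tie-breaking rule $\tau$ selects from the argmax), we get $Q^*(s,\pi^*(s))=V^*(s)$, and hence $\ell_{\widehat{\pi}}(s)=0$. For $s\in\mathcal{B}_\varepsilon$, I use the crude bound $0\le \ell_{\widehat{\pi}}(s)\le 2R_{\max}/(1-\gamma)$, which follows from \eqref{eq:value_bound} as already noted before \eqref{eq:policy_error_bound}. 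Together these give the pointwise inequality
\[
\ell_{\widehat{\pi}}(s)\le \frac{2R_{\max}}{1-\gamma}\mathbf{1}_{\mathcal{B}_\varepsilon}(s).
\]

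Integrating this against $d_\mu^{\widehat{\pi}}$ and multiplying by $1/(1-\gamma)$ yields \eqref{eq:margin_localization_bound}. Substituting \eqref{eq:margin_condition} then gives \eqref{eq:margin_rate_general}.

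I do not expect a genuine obstacle. The only points to check are measurability of $\mathcal{B}_\varepsilon$, so that $d_\mu^{\widehat{\pi}}(\mathcal{B}_\varepsilon)$ is defined, and that $d_\mu^{\widehat{\pi}}$ is a probability measure so the integration step is legitimate. Both are immediate in the finite grid setting, and in general they follow from measurability of $Q^*$ and hence of $\Delta$. One could sharpen the bound by using $\ell_{\widehat{\pi}}(s)\le\varepsilon$ on $\mathcal{B}_\varepsilon$, but that holds only when the chosen action is the runner-up. It is not needed for the stated result, so I will stick to the uniform bound.
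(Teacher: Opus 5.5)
Your proof is correct and matches the paper's argument. The paper substitutes $\mathbf{1}_{\{\widehat{\pi}(s)\neq\pi^*(s)\}}\le\mathbf{1}_{\{s\in\mathcal{B}_\varepsilon\}}$ into the already-derived bound \eqref{eq:policy_error_bound}, while you inline that bound's derivation from Proposition~\ref{prop:loss_decomp} by bounding $\ell_{\widehat{\pi}}$ pointwise by $\frac{2R_{\max}}{1-\gamma}\mathbf{1}_{\mathcal{B}_\varepsilon}$; your closing remark that the sharper bound $\ell_{\widehat{\pi}}\le\varepsilon$ needs the chosen action to be the runner-up, which is automatic only in the binary case of Corollary~\ref{cor:binary_margin}, is also accurate.
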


\begin{proof}
By \eqref{eq:no_errors_outside_boundary},
\[
\mathbf{1}_{\{\widehat{\pi}(s)\neq\pi^*(s)\}}
\le
\mathbf{1}_{\{s\in\mathcal{B}_\varepsilon\}}.
\]
Substituting this inequality into \eqref{eq:policy_error_bound} with $\pi=\widehat{\pi}$ proves \eqref{eq:margin_localization_bound}. The rate \eqref{eq:margin_rate_general} follows immediately from \eqref{eq:margin_condition}.
\end{proof}

A sharper rate is available when wrong decisions incur a loss no larger than the local margin. This is automatic in binary-action MDPs.

\begin{corollary}[Binary-action margin bound]
\label{cor:binary_margin}
Assume $|\mathcal{A}|=2$. Then, for any deterministic policy $\pi$,
\begin{equation}
\ell_\pi(s)
=
\Delta(s)\mathbf{1}_{\{\pi(s)\neq\pi^*(s)\}}.
\label{eq:binary_local_loss}
\end{equation}
Consequently,
\begin{equation}
J(\pi^*)-J(\pi)
=
\frac{1}{1-\gamma}
\mathbb{E}_{S\sim d_\mu^\pi}
\left[
\Delta(S)\mathbf{1}_{\{\pi(S)\neq\pi^*(S)\}}
\right].
\label{eq:binary_loss_bound}
\end{equation}
If $\pi$ makes errors only on $\mathcal{B}_\varepsilon$ and \eqref{eq:margin_condition} holds, then
\begin{equation}
J(\pi^*)-J(\pi)
\le
\frac{C}{1-\gamma}
\varepsilon^{\alpha+1}.
\label{eq:binary_margin_rate}
\end{equation}
\end{corollary}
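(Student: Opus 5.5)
The plan is to prove the three claims in order: first the pointwise identity \eqref{eq:binary_local_loss}, then the exact performance identity \eqref{eq:binary_loss_bound} by substituting it into Proposition~\ref{prop:loss_decomp}, and finally the rate \eqref{eq:binary_margin_rate} by bounding the integrand on $\mathcal{B}_\varepsilon$.

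For the pointwise identity, fix $s$ and write $\mathcal{A}=\{a^*,b\}$ with $a^*=\pi^*(s)$. By \eqref{eq:optimal_policy}, $a^*$ is selected by $\tau$ from $\arg\max_a Q^*(s,a)$, so $V^*(s)=Q^*(s,a^*)$. Because there are only two actions, the maximum over $a\neq\pi^*(s)$ in \eqref{eq:margin} is attained at the single remaining action, giving
\[
\Delta(s)=Q^*(s,a^*)-Q^*(s,b).
\]
I then split into two cases.
\begin{itemize}
\item If $\pi(s)=a^*$, then $\ell_\pi(s)=V^*(s)-Q^*(s,a^*)=0$, and the right-hand side of \eqref{eq:binary_local_loss} is also $0$.
\item If $\pi(s)\neq a^*$, then necessarily $\pi(s)=b$, so $\ell_\pi(s)=Q^*(s,a^*)-Q^*(s,b)=\Delta(s)$.
\end{itemize}
Ties need no separate treatment. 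If $Q^*(s,a^*)=Q^*(s,b)$, both sides vanish in either case, since then $\Delta(s)=0$.

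Next I substitute \eqref{eq:binary_local_loss} into \eqref{eq:loss_decomp_occupancy} of Proposition~\ref{prop:loss_decomp}. This gives \eqref{eq:binary_loss_bound} directly, as an equality rather than an inequality.

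For the rate, assume $\pi$ errs only on $\mathcal{B}_\varepsilon$, so $\mathbf{1}_{\{\pi(s)\neq\pi^*(s)\}}\le\mathbf{1}_{\{s\in\mathcal{B}_\varepsilon\}}$. On $\mathcal{B}_\varepsilon$ we have $\Delta(s)\le\varepsilon$ by \eqref{eq:margin_neighborhood}, and $\Delta\ge 0$ everywhere. Hence the integrand in \eqref{eq:binary_loss_bound} is pointwise at most $\varepsilon\,\mathbf{1}_{\mathcal{B}_\varepsilon}(s)$, and
\[
J(\pi^*)-J(\pi)\le\frac{\varepsilon}{1-\gamma}\,d_\mu^{\pi}(\mathcal{B}_\varepsilon).
\]
Applying \eqref{eq:margin_condition} yields $\frac{C}{1-\gamma}\varepsilon^{\alpha+1}$. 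Here \eqref{eq:margin_condition} is read with $\widehat{\pi}$ replaced by the policy $\pi$ under consideration, because the occupancy measure in \eqref{eq:binary_loss_bound} is $d_\mu^\pi$.

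There is no real obstacle. The only step needing care is that $V^*(s)=Q^*(s,\pi^*(s))$ holds exactly under the tie-breaking convention, and that with two actions the competitor in $\Delta(s)$ is exactly the wrong action $\pi(s)$. This second fact is precisely what fails when $K>2$: there a wrong action's loss can exceed the margin, which is why the sharper rate is stated only in the binary case.
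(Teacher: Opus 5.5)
Your proposal is correct and follows the paper's route: the wrong action is the unique competitor in $\Delta(s)$, so the pointwise identity holds. You then substitute it into the occupancy form of Proposition~\ref{prop:loss_decomp} and bound the integrand by $\varepsilon\mathbf{1}_{\mathcal{B}_\varepsilon}$ before applying \eqref{eq:margin_condition}, and your explicit handling of ties (where $\Delta(s)=0$) and of reading \eqref{eq:margin_condition} with $d_\mu^{\pi}$ in place of $d_\mu^{\widehat{\pi}}$ makes precise what the paper's brief proof leaves implicit.
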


\begin{proof}
When $|\mathcal{A}|=2$, a wrong action is necessarily the unique nonoptimal action. Therefore the local loss equals the gap between the optimal and nonoptimal action values, which is exactly $\Delta(s)$. The identity \eqref{eq:binary_loss_bound} follows from Proposition~\ref{prop:loss_decomp}. If errors occur only on $\mathcal{B}_\varepsilon$, then $\Delta(S)\le\varepsilon$ on the error set, and \eqref{eq:binary_margin_rate} follows from \eqref{eq:margin_condition}.
\end{proof}

Proposition~\ref{prop:loss_decomp} and Corollary~\ref{cor:binary_margin} explain why boundary-based approximation can be effective. Errors far from indifference boundaries are costly but easier to avoid; errors near the boundary may be more frequent but have smaller local decision loss. This is precisely the mechanism exploited by the margin-aware approximation in Section~\ref{sec3}.

Finally, value approximation and policy approximation have different informational requirements. Value-based methods seek to approximate the numerical function $Q^*(s,a)$, whereas boundary-based methods only require the signs of pairwise differences
\begin{equation}
\operatorname{sign}\{Q^*(s,a)-Q^*(s,b)\},
\qquad a,b\in\mathcal{A}.
\label{eq:sign_information}
\end{equation}
Hence, many action-value functions may induce the same policy tessellation. The geometric diagnostics introduced in Section~\ref{sec2} provide empirical measures of this realized decision complexity.

\begin{table}[t]
\centering
\caption{Abbreviations used for benchmark environments and learning methods.}
\label{tab:notation_labels}
\vskip 0.1in
\resizebox{7.5cm}{!}{
\begin{tabular}{lll}
\toprule
\textbf{Category} & \textbf{Short label} & \textbf{Full name} \\
\midrule
Environment & INV-Mix   & Inventory-Mixed-0.45 \\
Environment & INV-Lin   & Inventory-Linear-0.45 \\
Environment & INV-N0    & Inventory-Mixed-0.00 \\
Environment & INV-NH    & Inventory-Mixed-0.80 \\
Environment & INV-Sin   & Inventory-Sinusoidal-0.45 \\
Environment & INV-Quad  & Inventory-Quadratic-0.45 \\
Environment & INV-Hard  & Inventory-Hard-0.45 \\
Environment & QUE-Mix   & Queue-Mixed-0.25 \\
Environment & QUE-Lin   & Queue-Linear-0.25 \\
Environment & QUE-Burst & Queue-Bursty-0.25 \\
Environment & QUE-N0    & Queue-Mixed-0.00 \\
\midrule
Method & Linear Boundary        & Boundary-Linear \\
Method & Neural Boundary        & Boundary-MLP \\
Method & Margin Boundary        & Boundary-MLP-Margin \\
Method & Local Boundary         & Boundary-kNN \\
Method & Double Q-learning      & Double-Q \\
Method & FQI                    & FQI-ExtraTrees \\
Method & Tabular Q-learning     & Tabular-Q \\
\bottomrule
\end{tabular}
}
\end{table}

\section{Numerical Experiments}
\label{sec5}

This section evaluates whether the policy-tessellation viewpoint developed in Sections~\ref{sec2}-\ref{sec4} is empirically relevant for structured finite MDPs. The experiments are designed around three questions. First, do exact optimal policies in standard operational benchmarks display simple decision geometry? Second, can this geometry be approximated directly with small policy error and small value loss? Third, is the effect stable across benchmark variants and random seeds?

All environment and method labels used in the numerical section are defined in Table~\ref{tab:notation_labels}. Full implementation details are reported in Appendix Tables~\ref{tab:appendix_protocol} and~\ref{tab:appendix_env_specs}. Unless otherwise stated, all reported statistics are computed over 15 independent random seeds. Appendix Figure~\ref{figA1} reports the corresponding seed-level stability for a representative inventory benchmark.

\subsection{Benchmark environments and learning methods}
\label{subsec:benchmarks_methods}

We consider two finite structured MDP families. The first is an inventory-control problem with state $s=(x,z)$, where $x$ denotes inventory and $z$ denotes a demand-regime state. The action is an order quantity from a finite set. The second is a queue-admission problem with state $s=(q,\lambda)$, where $q$ denotes the queue length and $\lambda$ denotes an arrival-regime state. The action is binary and determines whether an arrival is admitted. These models are intentionally low-dimensional but nontrivial: they have structured transition laws, interpretable operational primitives, and switching-type optimal policies.

The inventory family contains seven variants: INV-Mix, INV-Lin, INV-N0, INV-NH, INV-Sin, INV-Quad, and INV-Hard. These variants modify the demand-regime nonlinearity and the observation-noise level. The queue-admission family contains four variants: QUE-Mix, QUE-Lin, QUE-Burst, and QUE-N0. The purpose of these variants is to test whether the observed decision-boundary structure persists beyond a single parameterization.

We compare four boundary-based approximations with three value-based baselines. The boundary-based methods are Linear Boundary, Neural Boundary, Margin Boundary, and Local Boundary. They correspond, respectively, to the linear classifier, neural classifier, margin-weighted classifier, and local interpolation rule introduced in Section~\ref{sec3}. The value-based baselines are Double Q-learning, fitted Q iteration (FQI), and Tabular Q-learning. The comparison is therefore between methods that directly approximate the decision map $\pi^*$ and methods that first approximate action values and then derive a greedy policy.

\subsection{Metrics}
\label{subsec:metrics}

The primary metric is policy error, defined as the fraction of grid states on which the learned policy differs from the exact optimal policy computed by dynamic programming. For a learned policy $\widehat{\pi}$, this is
\begin{equation}
\mathrm{PE}(\widehat{\pi})
=
\frac{1}{|\mathcal{S}_h|}
\sum_{s\in\mathcal{S}_h}
\mathbf{1}_{\{\widehat{\pi}(s)\neq \pi^*(s)\}}.
\label{eq:empirical_policy_error}
\end{equation}
The secondary metric is the value-gap bound associated with the induced decision loss, as motivated by Section~\ref{sec4}. We also report training time and the effective number of parameters or complexity units. These quantities are not meant to imply an identical statistical capacity across all methods; rather, they provide a compact comparison of accuracy, decision loss, and computational cost.

\subsection{Optimal policy tessellations}
\label{subsec:tessellations}

Figure~\ref{fig1} displays the exact optimal policy tessellations across all benchmark environments. Each panel is obtained from dynamic programming and shows the optimal action assigned to each grid state. The main observation is that the optimal policies are organized into a small number of contiguous decision regions separated by relatively simple switching boundaries. This pattern appears in both inventory and queue-admission benchmarks, despite changes in noise level, nonlinear demand structure, and arrival dynamics.

This figure provides the empirical motivation for the paper. The relevant decision object is not the full numerical value surface, but the partition of the state space into action regions. In these structured MDPs, that partition is visually and geometrically simpler than the value function that induces it.

Figure~\ref{fig2} makes this distinction explicit on a representative instance. The left panel shows the full optimal value landscape, whereas the right panel shows the corresponding action-margin geometry. The value function varies over the entire state space, while the decision-relevant information is concentrated around the indifference region. This supports the informational separation emphasized in Section~\ref{sec4}: optimal decision-making requires the signs and margins of action-value differences, not a uniformly accurate reconstruction of $Q^*$.

\subsection{Main performance results}
\label{subsec:main_results}

Table~\ref{tab:main_results} reports the main aggregate results for the inventory and queue-admission families. The table averages performance over the corresponding benchmark variants and reports mean $\pm$ standard deviation over 15 seeds.

For the inventory benchmarks, the best policy error is obtained by Margin Boundary, with error $0.0083\pm0.0032$. Neural Boundary obtains the smallest value gap, $0.008\pm0.006$, and Local Boundary remains competitive with policy error $0.0098\pm0.0029$ while requiring very small training time. Linear Boundary is less accurate than the nonlinear and local boundary methods, but still substantially improves over the value-based baselines.

The value-based baselines show larger policy errors on the same inventory family. FQI obtains policy error $0.2101\pm0.0182$, while Double Q-learning and Tabular Q-learning remain around $0.62$ and $0.61$, respectively. This does not imply that value-based reinforcement learning is generally ineffective; rather, it shows that in these structured finite MDPs, approximating the complete value object is a less direct route to recovering the optimal decision regions.

For the queue-admission benchmarks, the separation is again clear. Local Boundary reaches essentially zero policy error and zero value gap. Margin Boundary and Neural Boundary also produce near-optimal policies, with policy errors $0.0003\pm0.0006$ and $0.0038\pm0.0020$, respectively. In contrast, Double Q-learning, FQI, and Tabular Q-learning have policy errors between $0.1245$ and $0.1318$. Thus, across both benchmark families, the most accurate methods are those that approximate the policy tessellation directly.

\begin{table}[!ht]
\centering
\caption{Main performance comparison across the inventory and queue-admission benchmarks. Policy Error and Value Gap are reported as mean $\pm$ standard deviation over 15 independent runs. Lower values indicate better performance.}
\label{tab:main_results}
\vskip 0.15in
\resizebox{9.cm}{!}{
\begin{tabular}{llcccc}
\toprule
\textbf{Benchmark} & \textbf{Method} & \textbf{Policy Error} $\downarrow$ & \textbf{Value Gap} $\downarrow$ & \textbf{Train Time (s)} $\downarrow$ & \textbf{Params} \\
\midrule

\multirow{7}{*}{\rotatebox{90}{Inventory}} 

& Linear Boundary      
& \result{0.0408}{0.0023} 
& \result{0.099}{0.010} 
& 0.043 
& 15 \\

& Neural Boundary      
& \result{0.0117}{0.0045} 
& \best{0.008}{0.006} 
& 2.787 
& 4,677 \\

& Margin Boundary      
& \best{0.0083}{0.0032} 
& \result{0.010}{0.005} 
& 2.935 
& 4,677 \\

& Local Boundary       
& \result{0.0098}{0.0029} 
& \result{0.015}{0.006} 
& \textbf{0.004} 
& 6,400 \\

\cmidrule{2-6}

& Double Q-learning    
& \result{0.6236}{0.0152} 
& \result{24.933}{1.144} 
& 0.115 
& 8,250 \\

& FQI                  
& \result{0.2101}{0.0182} 
& \result{1.144}{0.169} 
& 1.300 
& 60 \\

& Tabular Q-learning   
& \result{0.6118}{0.0171} 
& \result{24.609}{1.128} 
& 0.132 
& 4,125 \\

\midrule

\multirow{7}{*}{\rotatebox{90}{Queue Adm.}} 

& Linear Boundary      
& \result{0.0148}{0.0008} 
& \result{0.030}{0.002} 
& 0.005 
& 3 \\

& Neural Boundary      
& \result{0.0038}{0.0020} 
& \result{0.004}{0.005} 
& 2.201 
& 4,417 \\

& Margin Boundary      
& \result{0.0003}{0.0006} 
& \result{0.001}{0.002} 
& 3.722 
& 4,417 \\

& Local Boundary       
& \best{0.0000}{0.0001} 
& \best{0.000}{0.000} 
& \textbf{0.004} 
& 6,400 \\

\cmidrule{2-6}

& Double Q-learning    
& \result{0.1245}{0.0032} 
& \result{2.303}{0.214} 
& 0.115 
& 2,604 \\

& FQI                  
& \result{0.1250}{0.0120} 
& \result{2.303}{0.346} 
& 0.825 
& 60 \\

& Tabular Q-learning   
& \result{0.1318}{0.0048} 
& \result{2.928}{0.383} 
& 0.143 
& 1,302 \\
\bottomrule
\end{tabular}
}
\end{table}

\subsection{Margin localization and sample scaling}
\label{subsec:margin_scaling}

Figure~\ref{fig3} investigates where policy errors occur. The horizontal axis is the optimal action margin $\Delta(s)$, and the vertical axis reports policy error conditional on margin bins. Boundary-based methods concentrate their errors near low-margin states. This is precisely the region in which actions are nearly indifferent and where Proposition~\ref{prop:margin_localization} and Corollary~\ref{cor:binary_margin} predict that mistakes should be less damaging. By contrast, the value-based baselines exhibit larger errors over a wider range of margins.

Figure~\ref{fig4} reports policy-error scaling as the number of training samples or Bellman updates increases. Boundary-based methods display faster error decay on the representative inventory benchmark. Local Boundary and Margin Boundary perform well at moderate sample sizes, while Neural Boundary improves steadily as the sample size increases. The value-based baselines do not exhibit comparable improvement in induced policy accuracy over the same range. This supports the view that sample efficiency is improved when approximation effort is concentrated on decision boundaries rather than on the full value landscape.

Finally, Figure~\ref{fig5} reports final policy errors for all methods across the 11 benchmark environments. The heatmap confirms that the low-error behavior of the boundary-based methods is not confined to a single environment. Margin Boundary, Local Boundary, and Neural Boundary remain accurate across both inventory and queue-admission variants. In contrast, the value-based baselines are less robust, particularly on the inventory benchmarks. Together with Appendix Figure~\ref{figA1}, this indicates that the observed performance differences are stable across random seeds and benchmark variants.

\begin{figure}[!ht]
    \centering
    \includegraphics[width=1.05\linewidth]{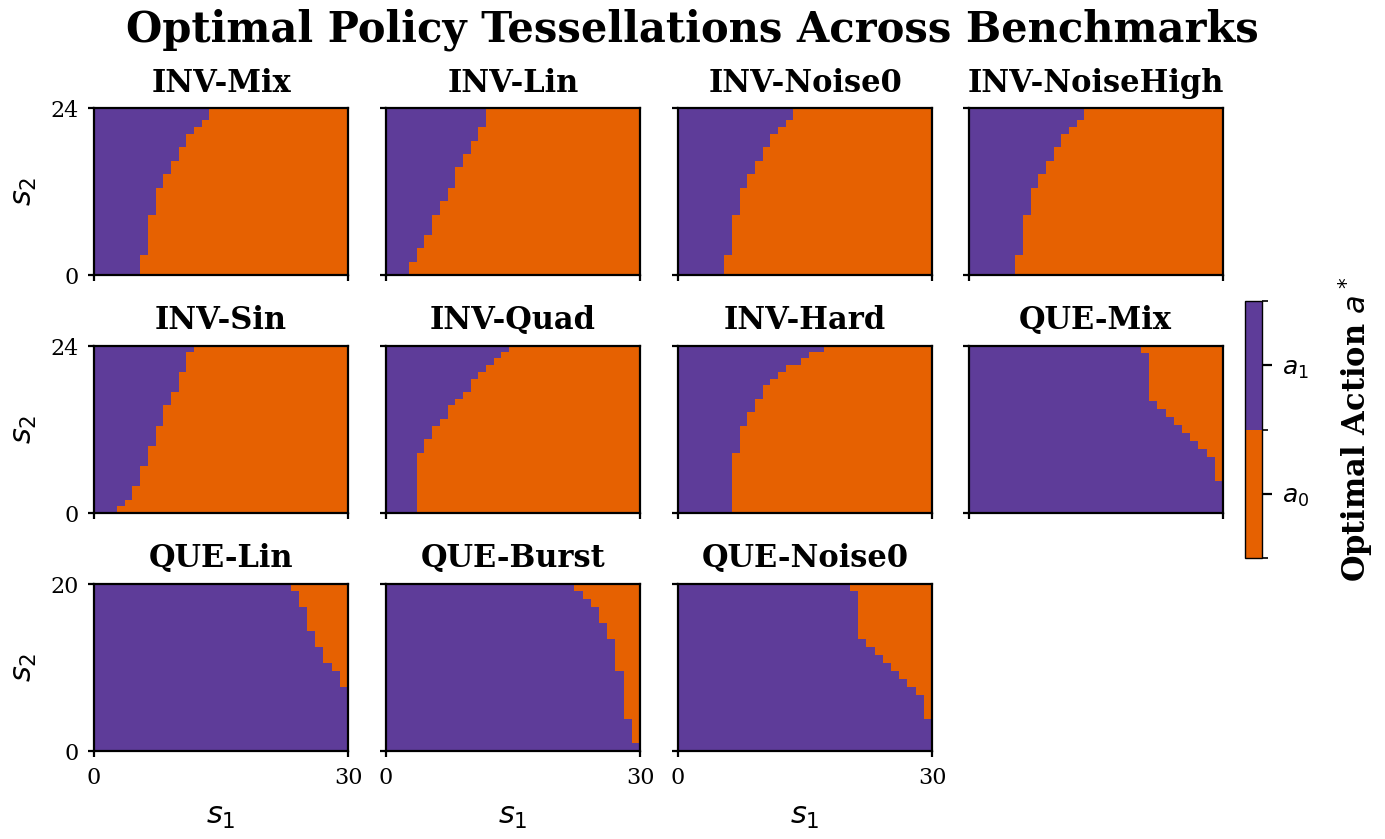}
    \caption{Optimal policy tessellations across inventory and queue-admission benchmarks. Each panel shows the exact optimal decision regions computed by dynamic programming. The figure illustrates that structured MDPs often induce low-complexity policy partitions despite nontrivial value landscapes.}
    \label{fig1}
\end{figure}

\begin{figure}[!ht]
    \centering
    \includegraphics[width=1.02\linewidth]{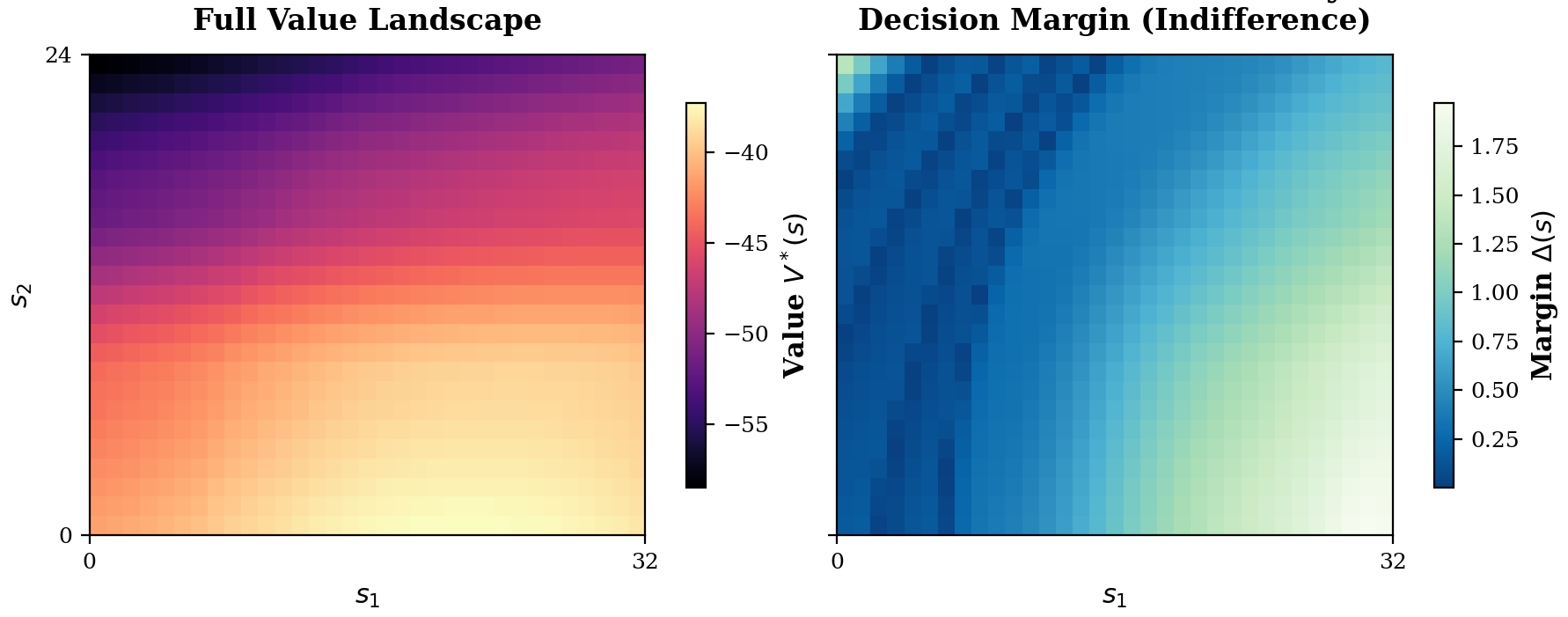}
    \caption{Value learning versus decision-boundary learning. The left panel shows the optimal value landscape, while the right panel shows the corresponding local action-margin geometry. The comparison illustrates that full value approximation contains substantially more numerical information than is needed for optimal action selection.}
    \label{fig2}
\end{figure}

\begin{figure}[!ht]
    \centering
    \includegraphics[width=1.05\linewidth]{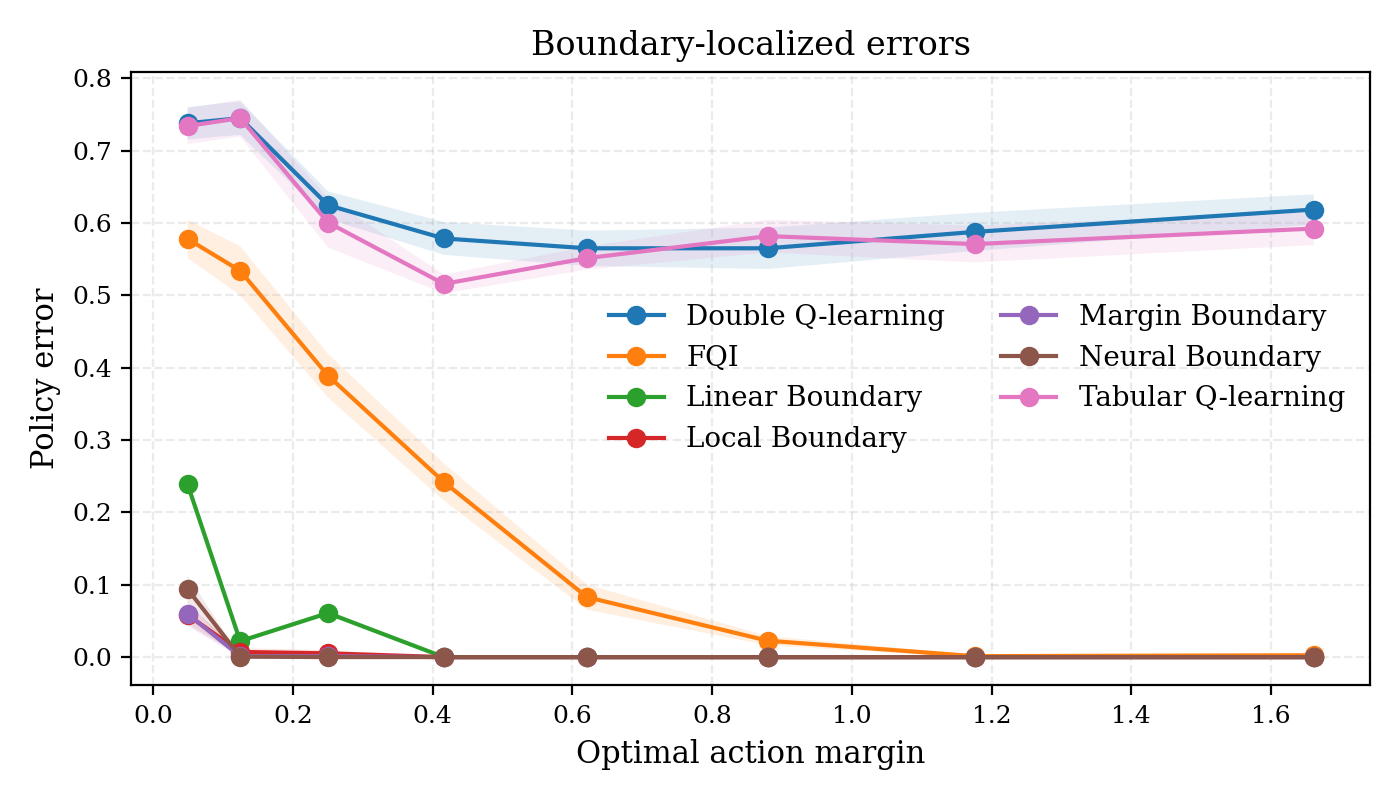}
    \caption{Boundary localization of policy errors. Policy errors are plotted as a function of the optimal action margin. Boundary-based methods concentrate errors near low-margin states, supporting the margin-based loss decomposition in Section~\ref{sec4}.}
    \label{fig3}
\end{figure}

\section{Discussion}
\label{sec6}

The numerical results support the central thesis of the paper: in structured finite MDPs, the geometry of optimal decisions may be substantially simpler than the numerical value function that generates them. Figure~\ref{fig1} shows that the exact optimal policies form simple tessellations across all benchmark variants. Figure~\ref{fig2} further shows that the full value landscape contains information that is not needed for action selection. Table~\ref{tab:main_results} then demonstrates that methods targeting the policy tessellation can achieve small policy errors and small value gaps.

The margin analysis provides the link between the theory and the experiments. Section~\ref{sec4} shows that policy loss depends on the local action loss and, under margin localization, on errors near indifference boundaries. Figure~\ref{fig3} confirms that boundary-based methods make most of their errors in precisely these low-margin regions. This explains why small residual classification errors need not translate into large value losses. Figure~\ref{fig4} further indicates that learning the boundary can yield favorable sample scaling.

The results also clarify the role of model complexity. Neural Boundary and Margin Boundary use richer function classes than Linear Boundary and achieve lower policy error. Local Boundary is highly accurate and fast in these finite grids, but its effective complexity grows with the sample size. Linear Boundary is less flexible, but remains a useful diagnostic: its good performance relative to value-based baselines indicates that much of the relevant policy geometry is already close to low-dimensional switching structure.

The scope of the results should be stated carefully. The benchmarks are finite, structured, and solved exactly to generate reference labels. The paper therefore does not claim that boundary-based approximation universally dominates value-based reinforcement learning. Instead, it identifies a practically relevant setting in which the optimal policy has low realized geometric complexity and can be learned directly. The comparison is consequently about the decision object being approximated: policy regions versus action-value levels.

Several extensions follow naturally. First, adaptive sampling could focus data collection near estimated indifference boundaries. Second, the geometric diagnostics of Section~\ref{sec2} could be used to predict when boundary-based learning is likely to be effective. Third, the approach could be extended to larger continuous-state MDPs by replacing grid-based tessellation measures with continuous geometric or topological proxies.

\begin{figure}[!ht]
    \centering
    \includegraphics[width=1.05\linewidth]{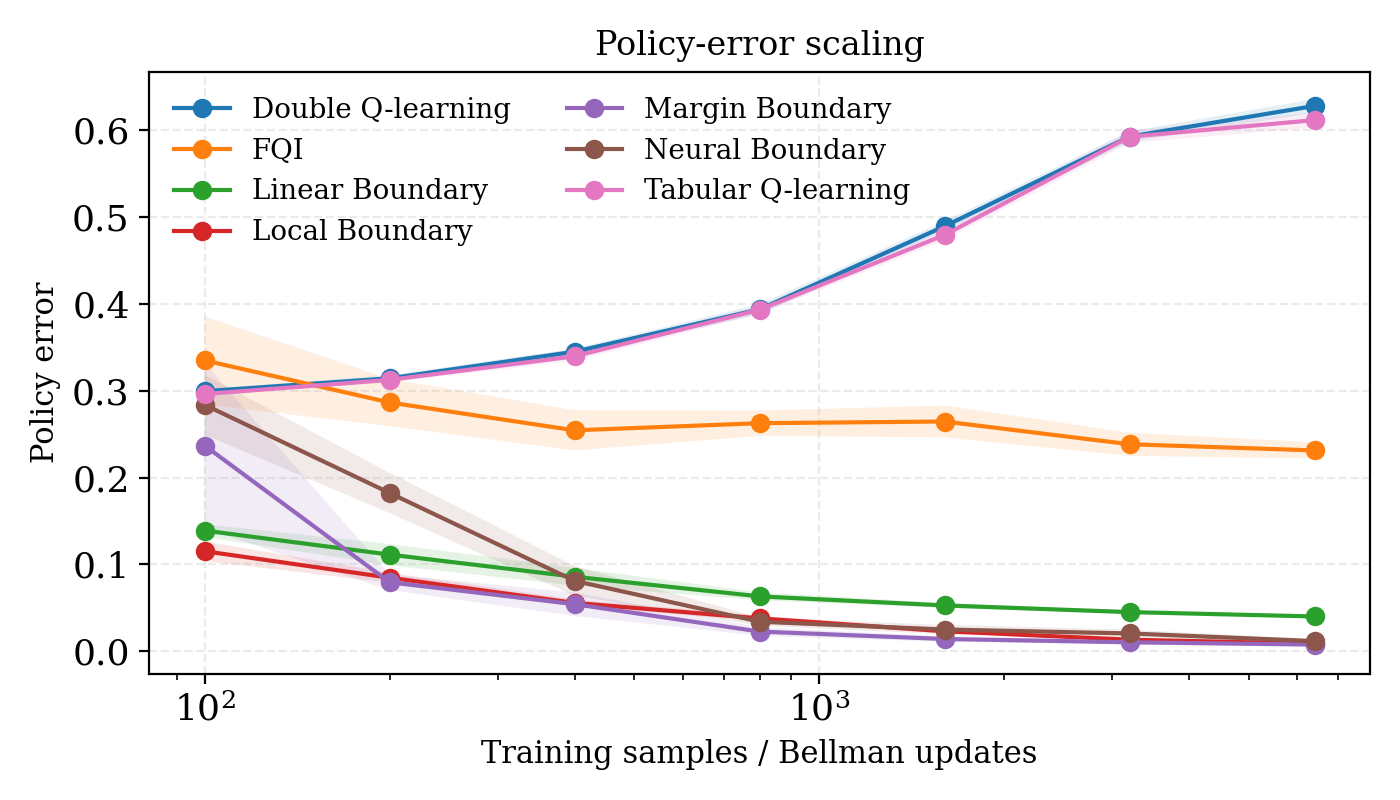}
    \caption{Policy-error scaling with the number of training samples or Bellman updates. Boundary-based methods exhibit faster error decay than value-based baselines, indicating improved sample efficiency in structured MDPs. Shaded regions report variability across 15 independent runs.}
    \label{fig4}
\end{figure}

\begin{figure}[!ht]
    \centering
    \includegraphics[width=1.08\linewidth]{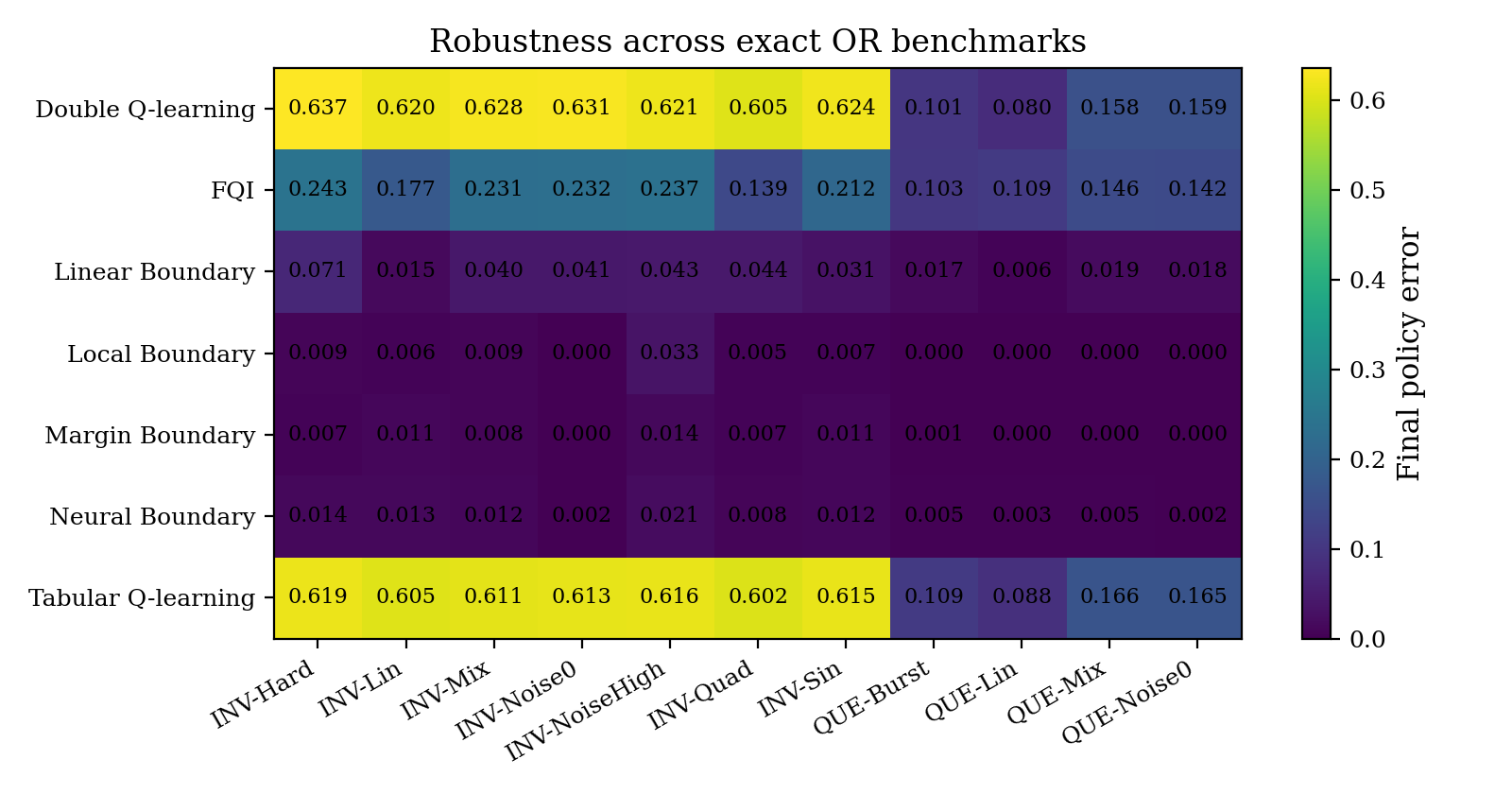}
    \caption{Robustness across exact OR benchmarks. Each cell reports the final policy error for one method and one benchmark environment. Boundary-based approximations remain accurate across both inventory and queue-admission settings, while classical value-based baselines exhibit larger errors.}
    \label{fig5}
\end{figure}

\newpage
\section{Conclusion}
\label{sec7}

This paper proposed policy tessellations as a geometric representation of optimal decision rules in structured MDPs. The main idea is simple: optimal decisions are determined by comparisons between action values, and therefore the decision boundary may be easier to approximate than the full value function.

We formalized this viewpoint, introduced computable diagnostics of policy geometry, and developed boundary-based approximation schemes. The theoretical analysis showed that policy loss is governed by local action losses and by the location of errors relative to indifference boundaries. Numerical experiments on inventory-control and queue-admission benchmarks showed that boundary-based methods achieve near-optimal policies with substantially smaller policy errors and value gaps than standard value-based baselines.

The results suggest that, for structured operational MDPs, learning the geometry of the optimal policy can be a more efficient path to high-quality decisions than learning the complete value landscape. This opens a route toward approximate dynamic programming methods that allocate approximation effort where it matters most: near the boundaries at which optimal decisions change.

\section*{Data Availability}

All numerical experiments in this study are based on synthetic benchmark environments generated algorithmically by the authors. 

\section*{Code Availability}

The Python code used to generate the benchmark environments, compute the optimal policies via dynamic programming, train all boundary-based and reinforcement-learning baselines, and reproduce the tables and figures is available from the corresponding author upon reasonable request.

\FloatBarrier
\setcounter{figure}{0}
\renewcommand{\thefigure}{\Alph{section}\arabic{figure}}

\setcounter{table}{0}
\renewcommand{\thetable}{\Alph{section}\arabic{table}}

\begin{appendix}

\section{Experimental Setup \& Hyperparameters}

\begin{table}[!ht]
\centering
\caption{Common experimental protocol.}
\label{tab:appendix_protocol}
\vskip 0.10in
\resizebox{8.cm}{!}{
\begin{tabular}{ll}
\toprule
\textbf{Parameter} & \textbf{Value} \\
\midrule
Number of independent seeds & 15 \\
Training sample sizes & $\{100,200,400,800,1600,3200,6400\}$ \\
Evaluation policy & Exact optimal policy from dynamic programming \\
Primary metric & Policy error \\
Secondary metric & Value-gap bound \\
Boundary metric & Optimal action margin \\
Reported uncertainty & Mean $\pm$ standard deviation across seeds \\
Benchmark families & Inventory control; queue admission \\
Discount factor & $\gamma=0.95$ \\
\bottomrule
\end{tabular}
}
\end{table}

\begin{table*}[!ht]
\centering
\caption{Environment specifications used in the numerical experiments. All benchmarks use discount factor $\gamma=0.95$. Inventory states are $(x,z)\in\{0,\ldots,x_{\max}\}\times\{0,\ldots,z_{\max}\}$ with order actions in $\mathcal{A}=\{0,4,8,12,16\}$. Queue-admission states are $(q,\lambda)\in\{0,\ldots,q_{\max}\}\times\{0,\ldots,\ell_{\max}\}$ with binary admission actions $\mathcal{A}=\{0,1\}$.}
\label{tab:appendix_env_specs}
\vskip 0.10in
\begin{scriptsize}
\setlength{\tabcolsep}{3.5pt}
\resizebox{18.cm}{!}{
\begin{tabular}{llcccccccc}
\toprule
\textbf{Label} & \textbf{Type} & \textbf{Nonlin.} & $\sigma$ & \textbf{State bounds} & $\mathcal{A}$ & \textbf{Demand/arrival} & \textbf{Costs/rewards} & \textbf{Trunc.} & \textbf{Strength} \\
\midrule

INV-Mix 
& Inventory & mixed & 0.45 
& $x_{\max}=32,\ z_{\max}=24$
& $\{0,4,8,12,16\}$
& $b=1.8,\ \beta_z=0.36$
& $c_o=0.25,\ K=0.20,\ h=0.08,\ p=1.55$
& $d_{\max}=55$
& 1.0 \\

INV-Lin 
& Inventory & linear & 0.45 
& $x_{\max}=32,\ z_{\max}=24$
& $\{0,4,8,12,16\}$
& $b=1.8,\ \beta_z=0.36$
& $c_o=0.25,\ K=0.20,\ h=0.08,\ p=1.55$
& $d_{\max}=55$
& 0 \\

INV-N0 
& Inventory & mixed & 0.00 
& $x_{\max}=32,\ z_{\max}=24$
& $\{0,4,8,12,16\}$
& $b=1.8,\ \beta_z=0.36$
& $c_o=0.25,\ K=0.20,\ h=0.08,\ p=1.55$
& $d_{\max}=55$
& 1.0 \\

INV-NH 
& Inventory & mixed & 0.80 
& $x_{\max}=32,\ z_{\max}=24$
& $\{0,4,8,12,16\}$
& $b=1.8,\ \beta_z=0.36$
& $c_o=0.25,\ K=0.20,\ h=0.08,\ p=1.55$
& $d_{\max}=55$
& 1.0 \\

INV-Sin 
& Inventory & sinusoidal & 0.45 
& $x_{\max}=32,\ z_{\max}=24$
& $\{0,4,8,12,16\}$
& $b=1.8,\ \beta_z=0.36$
& $c_o=0.25,\ K=0.20,\ h=0.08,\ p=1.55$
& $d_{\max}=55$
& 1.0 \\

INV-Quad 
& Inventory & quadratic & 0.45 
& $x_{\max}=32,\ z_{\max}=24$
& $\{0,4,8,12,16\}$
& $b=1.8,\ \beta_z=0.36$
& $c_o=0.25,\ K=0.20,\ h=0.08,\ p=1.55$
& $d_{\max}=55$
& 1.0 \\

INV-Hard 
& Inventory & hard & 0.45 
& $x_{\max}=32,\ z_{\max}=24$
& $\{0,4,8,12,16\}$
& $b=1.8,\ \beta_z=0.36$
& $c_o=0.25,\ K=0.20,\ h=0.08,\ p=1.55$
& $d_{\max}=55$
& 1.2 \\

\midrule

QUE-Mix 
& Queue & mixed & 0.25 
& $q_{\max}=30,\ \ell_{\max}=20$
& $\{0,1\}$
& $\lambda_0=1.0,\ \beta_\lambda=0.35,\ \mu=4.0$
& $r_a=2.0,\ h=0.10,\ c_{\rm ov}=4.0,\ c_{\rm rej}=0.40$
& $d_{\max}=45,\ s_{\max}=20$
& -- \\

QUE-Lin 
& Queue & linear & 0.25 
& $q_{\max}=30,\ \ell_{\max}=20$
& $\{0,1\}$
& $\lambda_0=1.0,\ \beta_\lambda=0.35,\ \mu=4.0$
& $r_a=2.0,\ h=0.10,\ c_{\rm ov}=4.0,\ c_{\rm rej}=0.40$
& $d_{\max}=45,\ s_{\max}=20$
& -- \\

QUE-Burst 
& Queue & bursty & 0.25 
& $q_{\max}=30,\ \ell_{\max}=20$
& $\{0,1\}$
& $\lambda_0=1.0,\ \beta_\lambda=0.35,\ \mu=4.0$
& $r_a=2.0,\ h=0.10,\ c_{\rm ov}=4.0,\ c_{\rm rej}=0.40$
& $d_{\max}=45,\ s_{\max}=20$
& -- \\

QUE-N0 
& Queue & mixed & 0.00 
& $q_{\max}=30,\ \ell_{\max}=20$
& $\{0,1\}$
& $\lambda_0=1.0,\ \beta_\lambda=0.35,\ \mu=4.0$
& $r_a=2.0,\ h=0.10,\ c_{\rm ov}=4.0,\ c_{\rm rej}=0.40$
& $d_{\max}=45,\ s_{\max}=20$
& -- \\

\bottomrule
\end{tabular}
}
\end{scriptsize}

\vspace{1mm}
\begin{scriptsize}
\emph{Notes.} $\sigma$ denotes observation noise. For inventory benchmarks, $b$ is the demand base, $\beta_z$ is the demand-regime slope, $c_o$ is the unit ordering cost, $K$ is the fixed ordering cost, $h$ is the holding cost, and $p$ is the shortage cost. For queue-admission benchmarks, $\lambda_0$ is the arrival base, $\beta_\lambda$ is the arrival slope, $\mu$ is the service rate, $r_a$ is the admission reward, $c_{\rm ov}$ is the overflow cost, and $c_{\rm rej}$ is the rejection cost.
\end{scriptsize}
\end{table*}

\section{Additional Result}

\begin{figure}[!ht]
    \centering
    \includegraphics[width=1.1\linewidth]{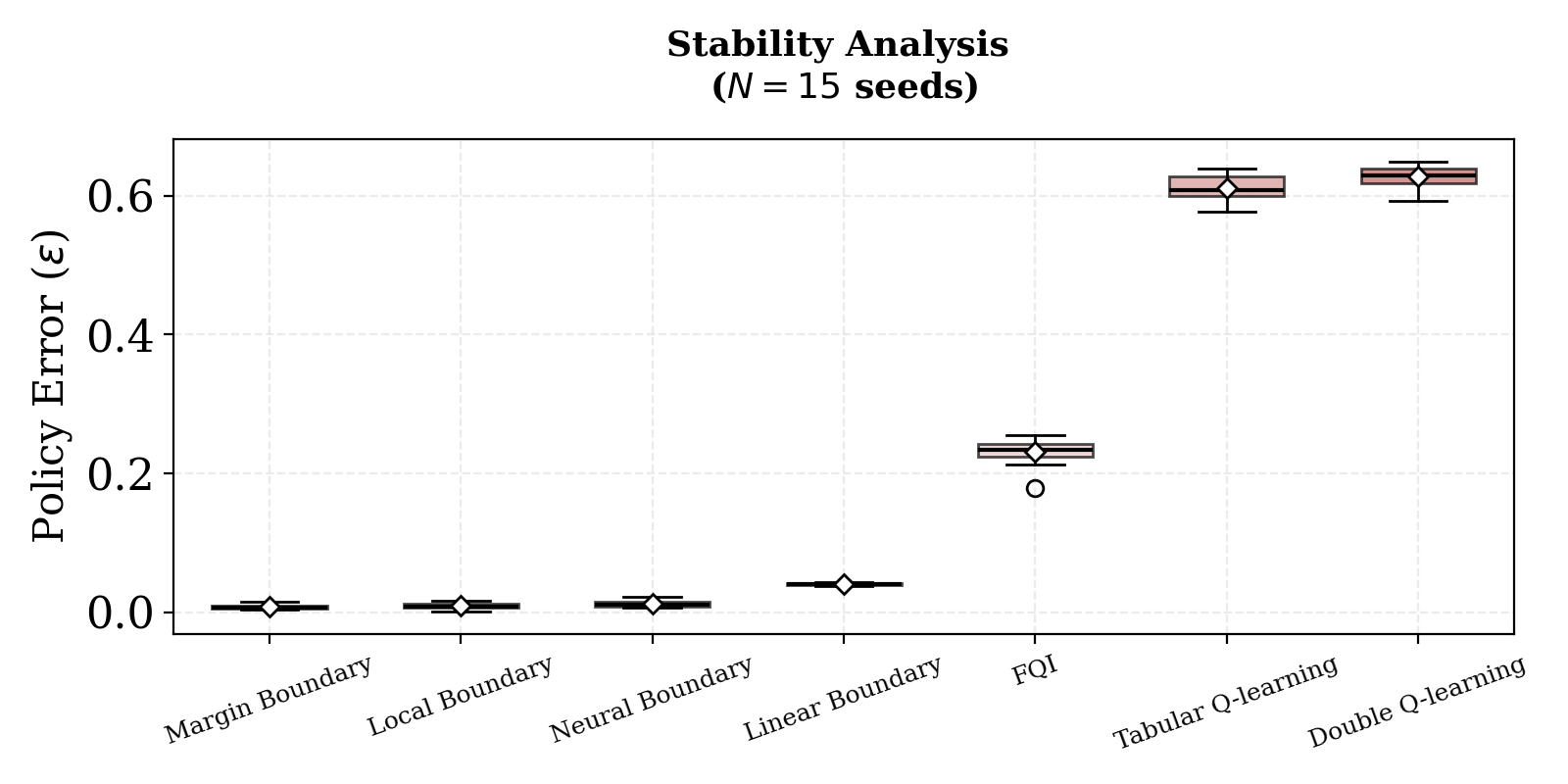}
    \caption{
Stability analysis over 15 random seeds for \texttt{INV-Mix-0.45}. Boundary-based methods show lower final policy error and smaller variability than value-based baselines, indicating greater reproducibility.
}
    \label{figA1}
\end{figure}

\end{appendix}

%%%%%%%%%%%% Supplementary Methods %%%%%%%%%%%%
%\footnotesize
%\section*{Methods}

%%%%%%%%%%%%% Acknowledgements %%%%%%%%%%%%%
%\footnotesize
%\section*{Acknowledgements}

%%%%%%%%%%%%%%   Bibliography   %%%%%%%%%%%%%%
\normalsize
\bibliography{references}

%%%%%%%%%%%%  Supplementary Figures  %%%%%%%%%%%%
%\clearpage

%%%%%%%%%%%%%%%%   End   %%%%%%%%%%%%%%%%
%\end{multicols}  % Method B for two-column formatting (doesn't play well with line numbers), comment out if using method A
\end{document}